%% file: iclr2026_conference.tex
\documentclass{article} % For LaTeX2e
\usepackage{iclr2026_conference,times}

\input{math_commands.tex}

\definecolor{citecolor}{RGB}{0, 80, 200}
\usepackage[colorlinks=true,citecolor=citecolor,linkcolor=red,urlcolor=citecolor]{hyperref}
\usepackage{url}
\usepackage{graphicx}
\usepackage{booktabs}
\usepackage{colortbl}
\usepackage{multirow}
\usepackage{amsmath}
\usepackage{amssymb}
\usepackage{amsthm}
\usepackage{algorithm}
\usepackage{algpseudocode}
\usepackage{enumitem}
\usepackage{tikz}
\usetikzlibrary{positioning, decorations.pathreplacing, calc}
\usepackage{setspace}
\usepackage{subcaption}
\usepackage{pifont}
\usepackage{wrapfig}
\usepackage{caption}

\newcommand{\methodname}{Ripple-Pivot Search}
\newcommand{\methodshort}{RPS}
\newcommand{\effectname}{ripple effect}

\newtheorem{proposition}{Proposition}

\title{{\methodname}: \\ Active Parallel Decoding for Diffusion Large Language Models}

\author{%
Yushi Ye$^1$ \, Xu Chen$^2$ \, Haoyun Jiang$^1$ \, Jinsong Lan$^2$ \, Haihong Tang$^{2}$ \\
\textbf{Bo Han}$^3$ \, \textbf{Ivor Tsang}$^{4}$ \, \textbf{Yanfeng Wang}$^5$ \, \textbf{Bo Zheng}$^2$ \, \textbf{Jiangchao Yao}$^{1, \dagger}$ \\
$^{1}$Cooperative Medianet Innovation Center, Shanghai Jiao Tong University \\ 
$^2$Alibaba Group \, $^3$TMLR Group, Department of Computer Science, Hong Kong Baptist University \\
$^4$A*STAR CFAR and Nanyang Technological University \\
$^{5}$School of Artificial Intelligence, Shanghai Jiao Tong University \\
\texttt{\{stephen-ye, Sunarker\}@sjtu.edu.cn}
}

\iclrfinalcopy % Uncomment for camera-ready version, but NOT for submission.
\begin{document}

\maketitle

\input{sections/abstract}
\input{sections/introduction}
\input{sections/background}
\input{sections/preliminary}
\input{sections/method}

\input{sections/experiments}
\input{sections/conclusion}

\bibliography{iclr2026_conference}
\bibliographystyle{iclr2026_conference}

\appendix
\input{sections/appendix}

\end{document}

%% file: math_commands.tex
\usepackage{amsmath,amsfonts,bm}

\def\eqref#1{equation~\ref{#1}}
\def\1{\bm{1}}

\DeclareMathAlphabet{\mathsfit}{\encodingdefault}{\sfdefault}{m}{sl}
\SetMathAlphabet{\mathsfit}{bold}{\encodingdefault}{\sfdefault}{bx}{n}

%% file: sections/abstract.tex
\begin{abstract}
    Diffusion Large Language Models (dLLMs) have emerged as a competitive alternative to autoregressive language models, offering the potential for substantially faster inference through parallel decoding. 
    Existing parallel decoding schedulers typically commit positions only after they meet a per-position criterion, overlooking how early commitments may benefit subsequent decoding. 
    We identify a \effectname{} in dLLM decoding: proactively committing a \emph{mid-entropy} pivot position can induce a pronounced reduction in uncertainty across the remaining masked positions. 
    This uncertainty reduction allows subsequent steps to unmask more tokens in parallel, thereby accelerating the overall decoding process. 
    % To exploit the \effectname{} under the ambiguity of mid-entropy predictions, we propose \methodname{} (\methodshort{}), a training-free decoding strategy that targets such pivots for early commitment. 
    % {\methodshort} determines the pivot's token assignment via lookahead-based evaluation of plausible candidates and commits only when resolving the pivot yields a net benefit over leaving it masked.
    To exploit the \effectname{}, we propose \methodname{} (\methodshort{}), a novel training-free decoding method that seeks mid-entropy positions as promising candidate pivots (\emph{where to decode}), and determines their token assignment that yields the greatest downstream benefit via lookahead evaluation (\emph{what to decode}).
    Across 3~dLLMs and 4~reasoning and code-generation benchmarks, \methodshort{} achieves 4--10$\times$ wall-clock speedup over the standard decoder while preserving generation quality, and improves accuracy over the previous lookahead baseline by up to 5.49\% while delivering higher throughput in most settings. When integrated with KV caching, \methodshort{} further achieves up to 18$\times$ wall-clock speedup over the standard decoder.
\end{abstract}

%% file: sections/introduction.tex
\section{Introduction}
\label{sec:intro}

Diffusion large language models (dLLMs)~\citep{DBLP:journals/corr/abs-2502-09992,DBLP:journals/corr/abs-2505-19223,DBLP:journals/corr/abs-2508-15487,DBLP:journals/corr/abs-2510-06303,DBLP:journals/corr/abs-2512-15745} have gained prominence as a viable alternative to autoregressive language models~\citep{DBLP:conf/nips/BrownMRSKDNSSAA20,DBLP:journals/corr/abs-2505-09388,DBLP:journals/fcsc/ZhaoZLTDHZMZLWDYCCJRLTL26}, offering the potential for faster inference by decoding multiple tokens in parallel. In each denoising step, the model produces predictions at all masked positions simultaneously, and a decoding scheduler selects which positions to unmask and assigns tokens at those positions. In practice, however, committing more positions per step increases the risk of error accumulation~\citep{DBLP:journals/corr/abs-2602-23225}, making the balance between decoding speed and generation quality a central challenge in dLLM inference.

To navigate this speed-quality trade-off, a growing body of work on parallel decoding~\citep{DBLP:journals/corr/abs-2507-18578,DBLP:journals/corr/abs-2510-00294,DBLP:journals/corr/abs-2602-06953,DBLP:journals/corr/abs-2602-22868,DBLP:journals/corr/abs-2602-23996} has focused on designing schedulers that commit multiple positions per step under reliability constraints, thereby amortizing the cost of each forward pass. The scheduler is thus responsible for two decisions: \emph{where} to unmask and \emph{what} token to assign. Most schedulers couple the two, committing each position to its greedy prediction once a per-position criterion is satisfied, such as sufficient confidence~\citep{DBLP:journals/corr/abs-2505-22618,DBLP:journals/corr/abs-2502-09992}, low predictive entropy~\citep{DBLP:journals/corr/abs-2508-15487}, cross-step stability~\citep{DBLP:journals/corr/abs-2511-05664}, or bounded cumulative entropy~\citep{DBLP:journals/corr/abs-2505-24857}. More recent lookahead-based methods~\citep{DBLP:journals/corr/abs-2512-16229,DBLP:journals/corr/abs-2511-21103} further test whether committing an additional position can benefit subsequent decoding. However, these methods employ lookahead mainly to determine whether and where to commit. Once a position is selected, its token is typically fixed to the model’s current top-1 prediction. As a result, the search explores different commitment positions but not alternative token assignments, potentially overlooking effective non-greedy decoding trajectories.
\begin{figure}[t]
    \centering
    
    \begin{tabular}{
        @{}
        b{0.57\linewidth}
        @{\hspace{0.01\linewidth}}
        b{0.40\linewidth}
        @{}
    }
    \begin{subfigure}[b]{\linewidth}
        \centering
        \scriptsize
        \setlength{\tabcolsep}{2.2pt}
        \renewcommand{\arraystretch}{1.08}
    
        \resizebox{\linewidth}{!}{%
        \begin{tabular}[b]{
            @{}
            >{\raggedright\arraybackslash}m{0.42\linewidth}
            >{\raggedright\arraybackslash}m{0.30\linewidth}
            @{\hspace{1.2pt}}
            >{\raggedright\arraybackslash}m{0.28\linewidth}
            @{}
        }
        \toprule
        \textbf{Method family}
            & \textbf{Where to decode}
            & \textbf{What to decode} \\
        \midrule
    
        Confidence-based~\citep{
            DBLP:journals/corr/abs-2505-22618,
            DBLP:journals/corr/abs-2502-09992}
            & Confidence-qualified\newline positions
            & Top-1 (greedy) \\
    
        Entropy-based~\citep{
            DBLP:journals/corr/abs-2508-15487,
            DBLP:journals/corr/abs-2505-24857}
            & Entropy-qualified\newline positions
            & Top-1 (greedy) \\
    
        Stability-based~\citep{
            DBLP:journals/corr/abs-2511-05664,
            DBLP:journals/corr/abs-2506-10848}
            & Cross-step stable\newline positions
            & Top-1 (greedy) \\
    
        Lookahead-based~\citep{
            DBLP:journals/corr/abs-2512-16229,
            DBLP:journals/corr/abs-2511-21103}
            & Lookahead-selected\newline positions
            & Top-1 (greedy) \\
    
        \textbf{RPS (ours)}
            & \textbf{Mid-entropy\newline pivot}
            & \textbf{Lookahead selected\newline (non-greedy)} \\
    
        \bottomrule
        \end{tabular}
        }%
    \end{subfigure}
    
    &
    % ============================================================
    % Right
    % ============================================================
    \begin{subfigure}[b]{\linewidth}
        \centering
    
        \includegraphics[
            width=\linewidth
        ]{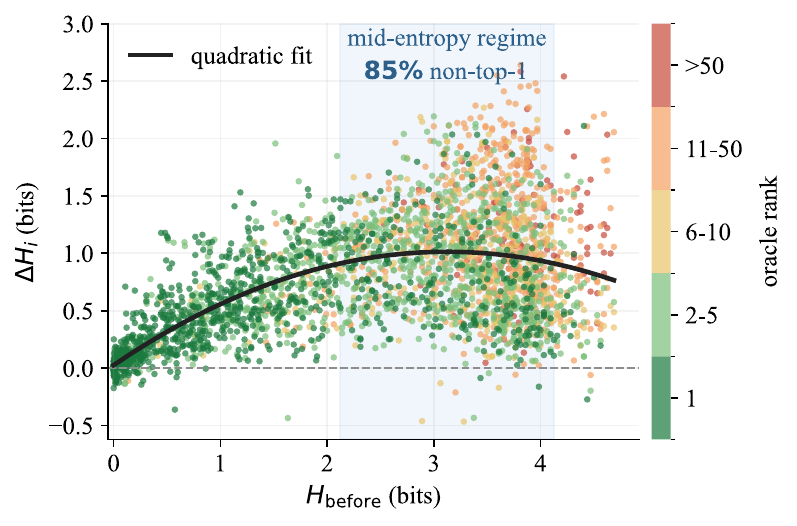}
    \end{subfigure}
    \end{tabular}
    \vspace{-10pt}
    \caption{
    \textbf{Left:} Comparison of parallel decoding schedulers.
    \textbf{Right:} Starting from a partially decoded block, we commit each
    remaining masked position to its oracle token and measure the resulting
    entropy reduction $\Delta H_i$ at other masked positions (y-axis) against
    the pre-commit entropy $H_{\mathrm{before}}$ of the resolved position
    (x-axis). Colors indicate the oracle token's rank under the model
    distribution, and the shaded band marks the mid-entropy regime.
    }
    
    \label{fig:ripple}
    \vspace{-15pt}
    \end{figure}

The benefit of early commitment depends critically on which unresolved position is selected. To characterize this, we conduct an oracle analysis over candidate commitments in Fig.~\ref{fig:ripple} (right). This analysis reveals a distinct pattern that we refer to as the \emph{\effectname}: proactively committing a pivot position in the \textbf{mid-entropy regime} induces the strongest downstream uncertainty reduction. Intuitively, such positions are not fully determined, but are already sufficiently tied to the current partial decoding state; resolving them can therefore influence other masked positions more strongly than positions that are either already certain or still weakly constrained. Moreover, the correct token is not the model's top-1 prediction in 85\% of mid-entropy cases, revealing a mismatch with existing lookahead-based schedulers. As shown in Fig.~\ref{fig:ripple} (left), these methods use lookahead to decide \emph{where} to commit while fixing \emph{what} to the current top-1 prediction. They therefore search over commitment positions but not token assignments, potentially missing beneficial non-greedy trajectories.

Motivated by these findings, we propose \textbf{\methodname{} (\methodshort{})}, a training-free parallel decoding method that exploits beneficial early commitments in the mid-entropy regime. \methodshort{} first applies a two-stage pivot filter to exclude positions that are either overly certain or insufficiently supported by the current predictive distribution, thereby focusing in the mid-entropy regime. 
Given this pivot, 
% \methodshort{} next constructs an adaptive set of plausible token assignments, evaluates their downstream benefit with a single lookahead forward pass, and identifies the most promising candidate for early commitment, so that the selected assignment is the one expected to best amplify the \effectname{} rather than merely the current top-1 prediction. 
\methodshort{} then constructs an adaptive set of plausible token assignments, evaluates their downstream benefits with a single lookahead forward pass, and selects the most promising token for early commitment. This allows \methodshort{} to choose the assignment expected to best amplify the \effectname{}.
% , rather than defaulting to the model’s current top-1 prediction.

Furthermore, to prevent premature commitments from accumulating errors, \methodshort{} commits a token only when it improves over leaving the pivot unmasked. 
% Overall, by jointly modeling where and what to commit, \methodshort{} fully exploits the \effectname{} to accelerate decoding. 
% without compromising generation quality.
In summary, our contributions are:
\begin{itemize}
\item We identify a \emph{\effectname} in dLLM decoding: proactively committing a position in the mid-entropy regime induces the strongest downstream uncertainty reduction, enabling more parallel commits in subsequent steps. Curicially, the correct token in this regime is frequently not the current top-1 prediction, motivating token assignment beyond the greedy decoding.
% \item We propose \methodname{} (\methodshort{}), a training-free decoding method that enables proactive early commitment. \methodshort{} selects a mid-entropy pivot position and determines its token assignment through lookahead-based evaluation of an adaptive set of plausible candidates. To preserve generation quality, \methodshort{} abstains from commitment when no candidate outperforms leaving the pivot unmasked.
\item We propose \methodname{} (\methodshort{}), a novel training-free decoding method that seeks mid-entropy pivots with high potential for downstream uncertainty reduction, and determines its token assignment through lookahead-based evaluation on an adaptive candidate set, committing when the best candidate outperforms leaving the pivot unmasked.
\item We conduct extensive experiments on three dLLMs across four reasoning and code-generation benchmarks. \methodshort{} achieves a \textbf{4--10}$\times$ inference speedup over the standard one-token-per-step baseline while preserving generation quality, and improves accuracy over the previous lookahead baseline by up to \textbf{5.49\%} while delivering higher throughput in most settings. When integrated with KV caching, \methodshort{} further achieves up to \textbf{18}$\times$ wall-clock speedup over the standard decoder.
\end{itemize}

%% file: sections/background.tex
\section{Related Work}
\label{sec:related}

\paragraph{Criterion-based Parallel Decoding.}
The most direct approach to accelerating dLLM inference commits multiple positions per decoding step based on per-position prediction statistics. Fast-dLLM~\citep{DBLP:journals/corr/abs-2505-22618} unmasks every position whose predictive confidence exceeds a fixed threshold and introduces block-wise KV caching to reduce per-forward cost. KLASS~\citep{DBLP:journals/corr/abs-2511-05664} augments confidence with a KL divergence criterion that tracks prediction stability across consecutive steps, unmasking only positions that are both confident and temporally consistent. EB-Sampler~\citep{DBLP:journals/corr/abs-2505-24857} controls the number of tokens unmasked per step by bounding the cumulative entropy of the newly committed positions. Learn2PD~\citep{DBLP:journals/corr/abs-2509-25188} further replaces fixed heuristics with a lightweight learned filter that predicts whether each current token prediction matches the final output, yielding an adaptive criterion for parallel unmasking.

\paragraph{Lookahead-based Parallel Decoding.}
% Lookahead has been extensively studied for accelerating autoregressive LLMs, where speculative or auxiliary predictions are verified by the base model~\citep{DBLP:conf/icml/LeviathanKM23,DBLP:conf/icml/CaiLGPLCD24,DBLP:journals/corr/abs-2503-01840}. In dLLM decoding, another family adopts this lookahead philosophy: they first hypothesize token assignments and then let the model evaluate or verify the outcome. 
Lookahead-based dLLM decoding accelerates inference by proposing token assignments and evaluating their downstream effects.
WINO~\citep{DBLP:journals/corr/abs-2507-18578} drafts all positions admitted by a relaxed confidence threshold and evaluates the committed tokens under the enriched context, remasking those whose verification confidence falls below a stricter threshold. LoPA~\citep{DBLP:journals/corr/abs-2512-16229} forms lookahead branches from the highest-confidence positions that remain masked after the standard decoding update and selects the branch with the greatest future confidence. From an information-theoretic perspective, ETE~\citep{DBLP:journals/corr/abs-2511-21103} argues that prioritizing high-confidence positions limits the information revealed per decoding round. It therefore explores high-information positions near a prescribed confidence level and selects the one that unlocks the most downstream high-confidence tokens.
% Both LoPA and ETE use lookahead to refine \emph{where} to commit but evaluate each selected position under its greedy token assignment. In contrast, \methodshort{} targets the mid-entropy regime based on the empirically observed \effectname{}, and uses lookahead to determine not only \emph{where} to intervene but also \emph{what} token to assign by evaluating plausible token candidates.
Yet these methods primarily use lookahead to decide \emph{where} to commit with keeping greedy token assignments. By contrast, \methodshort{} focuses on mid-entropy regime and uses lookahead to jointly decide \emph{where} to commit and \emph{what} token to assign.

% \paragraph{Cache Optimization.}
% A complementary line of work reduces per-forward cost in dLLMs through KV caching. Fast-dLLM~\citep{DBLP:journals/corr/abs-2505-22618} introduces block-wise approximate KV caching that reuses inter-block key-value states across denoising steps, avoiding redundant recomputation for blocks whose content has stabilized. dKV-Cache~\citep{DBLP:journals/corr/abs-2505-15781} caches the KV states of already-decoded tokens and reuses them with a one-step delay, balancing representational freshness against computational savings. d$^2$Cache~\citep{DBLP:journals/corr/abs-2509-23094} applies two-stage token-level selection to determine which KV entries require updating at each step, reducing unnecessary recomputation. These cache techniques are orthogonal to decoding scheduling and can be combined with our method for further acceleration.

%% file: sections/preliminary.tex
\section{Preliminary}
\label{sec:method-prelim}

\paragraph{Notation.} 
% Consider a masked discrete diffusion language model with vocabulary $\mathcal{V}$, which contains a special mask token $\texttt{[MASK]}$. A response of length $L$ is represented as $x \in \mathcal{V}^L$, where each position is either decoded (carrying a token from $\mathcal{V}$) or masked. Let $\mathcal{M} \subseteq \{1, \dots, L\}$ denote the set of currently masked positions. Given $x$, the model returns per-position predictive distributions $\{p_i = p_\theta(\cdot \mid x)\}_{i \in \mathcal{M}}$ over $\mathcal{V}$ in a single forward pass. Write $H(p_i)$ for the predictive entropy at position $i$ and $P_i^{\max} \triangleq \max_{v \in \mathcal{V}} p_i(v)$ for its \emph{confidence}, i.e., its top-$1$ probability.
Consider a masked discrete diffusion language model with vocabulary
$\mathcal{V}$, which contains a special mask token $[\texttt{MASK}]$.
Given a prompt $y$, a response of length $L$ is represented
as $x\in\mathcal{V}^L$, where each position is either decoded or masked.
Let $\mathcal{M}\subseteq\{1,\ldots,L\}$ denote the set of currently masked
response positions. Given the input
$[y\Vert x]$, the model returns predictive
distributions
$\{p_i=p_\theta(\cdot\mid y,x)\}_{i\in\mathcal{M}}$
over $\mathcal{V}$ in a single forward pass. Write $H(p_i)$ for the predictive entropy at position $i$ and $P_i^{\max} \triangleq \max_{v \in \mathcal{V}} p_i(v)$ for its \emph{confidence}, i.e., its top-$1$ probability.

\paragraph{Decoding.} Standard practice~\citep{DBLP:journals/corr/abs-2502-09992,DBLP:journals/corr/abs-2512-15745,DBLP:journals/corr/abs-2505-22618} adopts a semi-autoregressive schedule: the length-$L$ response is partitioned into contiguous blocks of size $B$ and decoded left-to-right, fully unmasking each block before the next. Within each block, every decoding step shares the same form. Given the current $\{p_i\}_{i \in \mathcal{M}}$, a commit set $\mathcal{S} \subseteq \mathcal{M}$ is selected and each position in $\mathcal{S}$ is assigned its greedy (top-1) prediction,
\begin{equation}
  x_i \;\leftarrow\; \arg\max_{v \in \mathcal{V}} p_i(v) \quad \text{for each } i \in \mathcal{S}. \label{eq:commit}
\end{equation}
Different schedulers are characterized by how they specify $\mathcal{S}$. The three most common rules are:
\begin{align} \label{eq:rules}
    \mathcal{S} = 
    \begin{cases}
    \operatorname{top\text{-}k}_{i \in \mathcal{M}}\big(\left\{P_1^{\max},\dots,P_L^{\max}\right\}\big), & \text{highest-confidence decoding~\citep{DBLP:journals/corr/abs-2502-09992}}; \\ \\
    \operatorname{top\text{-}k}_{i \in \mathcal{M}}\big(\left\{\!-H(p_1),\dots,\!-H(p_L)\right\}\big), & \text{lowest-entropy decoding~\citep{DBLP:journals/corr/abs-2508-15487}}; \\ \\
    \big\{ i \in \mathcal{M} : P_i^{\max} \geq \tau \big\}, & \text{confidence-aware decoding~\citep{DBLP:journals/corr/abs-2505-22618}}. \\
    \end{cases}
\end{align}
All three rules differ in how $\mathcal{S}$ is constructed but share the same token-assignment mechanism: each committed position receives its greedy prediction (Eq.~\ref{eq:commit}). Positions not selected into $\mathcal{S}$ remain masked until they satisfy the criterion as decoding progresses.
% \begin{itemize}[topsep=2pt,itemsep=4pt,leftmargin=*]
%   \item \textbf{Highest-confidence decoding}~\citep{DBLP:journals/corr/abs-2502-09992}. The commit set contains the top-$k$ positions with the highest confidence, where $k$ follows a predefined schedule:
%     \begin{equation*}
%       \mathcal{S}_{\text{conf}}(k) \;=\; \operatorname{top\text{-}k}_{i \in \mathcal{M}}\big(P_i^{\max}\big).
%     \end{equation*}
%   \item \textbf{Lowest-entropy decoding}~\citep{DBLP:journals/corr/abs-2508-15487}. The commit set contains the top-$k$ positions with the lowest predictive entropy, where $k$ follows a predefined schedule:
%     \begin{equation*}
%       \mathcal{S}_{\text{ent}}(k) \;=\; \operatorname{top\text{-}k}_{i \in \mathcal{M}}\big(\!-H(p_i)\big).
%     \end{equation*}
%   \item \textbf{Confidence-aware decoding}~\citep{DBLP:journals/corr/abs-2505-22618}. The commit set contains every masked position whose confidence exceeds a fixed threshold $\tau$:
%     \begin{equation*}
%       \mathcal{S}_{\text{thr}}(\tau) \;=\; \big\{ i \in \mathcal{M} : P_i^{\max} \geq \tau \big\}.
%     \end{equation*}
% \end{itemize}
% All three rules differ in how $\mathcal{S}$ is constructed but share the same token-assignment mechanism: each committed position receives its greedy prediction (Eq.~\ref{eq:commit}). Positions not selected into $\mathcal{S}$ remain masked until they satisfy the criterion as decoding progresses.

%% file: sections/method.tex
\section{Method}
\label{sec:method}
% The standard decoding process commits positions only after they satisfy a per-position criterion, assigning the greedy prediction. However, 

% Fig.~\ref{fig:ripple} (right) reveals that proactively committing a pivot position in the mid-entropy regime induces the strongest downstream uncertainty reduction. This regime is also where the correct token is often not the top-1 prediction, motivating us to design ripple-pivot search beyond greedy decoding. 

% We propose the ripple-pivot search, a novel training-free decoding method detailed below.
% making greedy assignment unreliable for early commitment. Exploiting this opportunity therefore requires addressing two questions: \emph{where} to intervene and \emph{what} token to assign.

\subsection{\methodname{}}
\label{sec:method-algorithm}

% \paragraph{Overview.} \methodshort{} introduces a per-step pivot search on top of the standard decoding process (instantiated with confidence-aware decoding in our experiments): it proactively identifies a mid-entropy position for early commitment and determines its token assignment by evaluating plausible candidates against downstream decoding benefit (Figure~\ref{fig:overview}, left). This mechanism comprises two complementary components: \emph{pivot selection} (where to intervene) and \emph{lookahead scoring} (what to commit), described in turn below.
Our RPS introduces a per-step pivot search on top of the standard decoding process as illustrated in Fig.~\ref{fig:overview}, consisting of \emph{pivot selection} (where to commit) and \emph{lookahead scoring} (what to commit). 
% In the following, we provide the details of these two parts.

\paragraph{Pivot selection.}
Pivot selection must balance benefit and cost: pivots should lie in the mid-entropy regime, where early resolution is most beneficial, while keeping lookahead token search compact. Although such positions are uncertain, the correct token typically remains among the top-ranked candidates. Motivated by this observation, \methodshort{} truncates each position’s support to the top-$k_{\max}$ tokens, discarding most vocabulary items irrelevant to the decision, which is formulated as follows
\begin{equation}  \label{eq:selection}
  i^\star
  =
  \underset{i \in \mathcal{M}:\, \mu_i \geq \tau_{\text{pivot}}}{\arg\max}
  \left\{
    -\sum_{v \in \mathcal{T}_i} p_i(v)\log p_i(v)
  \right\},
  \qquad
  \mu_i = \sum_{v \in \mathcal{T}_i} p_i(v),
\end{equation}
where the truncated support $\mathcal{T}_i \subseteq \mathcal{V}$ contains the $k_{\max}$ tokens with the highest probabilities under predictive distribution $p_i$, and $\mu_i$ denotes the corresponding retained probability mass. With Eq.~(\ref{eq:selection}), the probability-mass constraint $(\mu_i \geq \tau_{\text{pivot}})$ excludes positions in the high-entropy regime, and maximizing truncated entropy over the retained positions avoids those that are already nearly determined. The resulting pivot therefore naturally falls in the mid-entropy regime to propagate useful information. If no position satisfies the probability-mass constraint, all remaining masked positions are still in a highly uncertain state where reliable intervention is infeasible. In this case, \methodshort{} skips pivot search at the current step and proceeds with standard decoding alone.

\begin{figure*}[t]
\centering
\includegraphics[width=\textwidth]{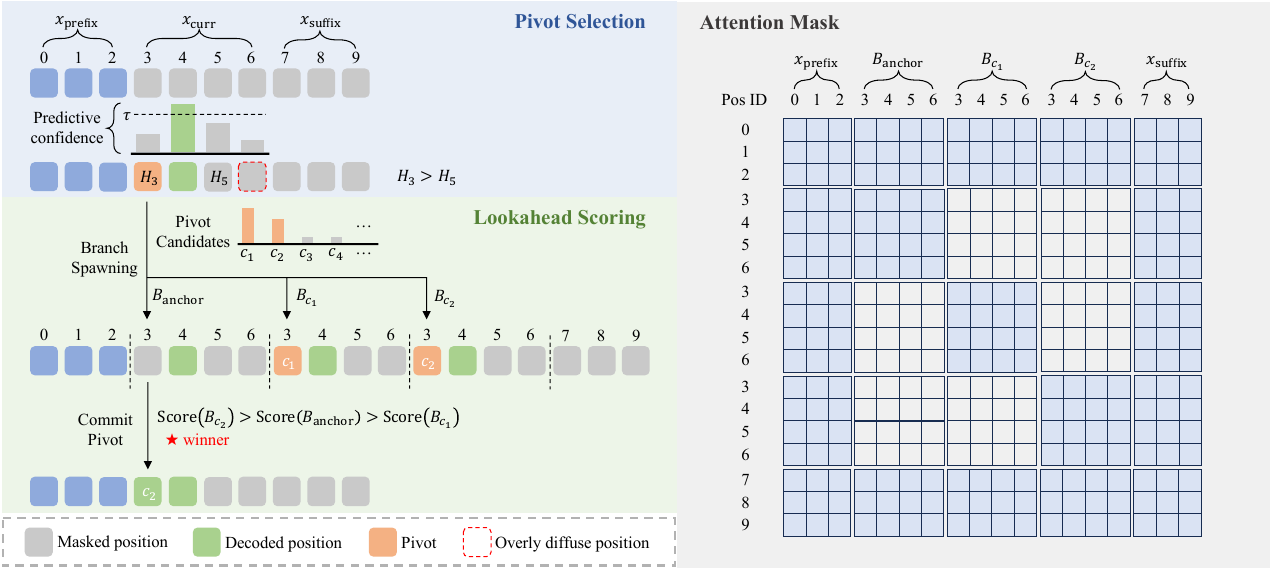}
\vspace{-15pt}
\caption{\textbf{Overview of {\methodshort}.} \textbf{Left}: One-step example of {\methodshort} decoding. The upper region illustrates pivot selection, while the lower region shows lookahead scoring. \textbf{Right}: Customized attention mask for lookahead forward pass. The packed sequence contains the shared context with candidate branches. Light-blue cells indicate allowed attention and blank cells indicate blocked attention.}
\vspace{-10pt}
\label{fig:overview}
\end{figure*}

\paragraph{Lookahead scoring.}
With the pivot $i^\star$ identified, \methodshort{} then constructs an adaptive candidate set to determine the token assignment, where the set is defined as $\mathcal{C}=\left\{v|p_{i^\star}(v) \geq r*P_{i^\star}^{\text{max}},~\forall v\in\mathcal{T}_{i^\star}\right\}\cup \{[\texttt{MASK}]\}$ by retaining  tokens that satisfy the reachability ratio $r$ relative to the top-1 probability. The reason that we include $[\texttt{MASK}]$ here is to allow the pivot to remain masked. As shown in Fig.~\ref{fig:overview}, we build a corresponding lookahead branch $B_c$ for each candidate token $c \in \mathcal{C}$ by assigning $c$ to $i^\star$ (the branch induced by assigning $[\texttt{MASK}]$ is referred to as the anchor branch $B_{\text{anchor}}$), and all branches are evaluated jointly in a single lookahead forward pass with an isolated attention mask. Finally, \methodshort{} selects the token assignment for the pivot based on the following equation:
\begin{equation}
  c^\star
  =
  \underset{c \in \mathcal{C}}{\arg\max}
  \left\{
    -\frac{1}{|\mathcal{M}| - 1}
    \sum_{i \in \mathcal{M} \setminus \{i^\star\}}
    H\!\left(p_i^c\right)
    + \lambda \log p_{\text{anchor}}(c)
  \right\},
  \label{eq:score}
\end{equation}
% where $p_i^c$ denotes the predictive distribution at position $i$ under the branch induced by assigning $c$ at the pivot, $p_{\text{anchor}}(c)$ denotes the anchor branch's probability for $c$ at the pivot, and $\lambda \geq 0$ is the plausibility weight balancing current plausibility against future benefit.
where $p_i^c$ denotes the predictive distribution at position $i$ under the branch assigning $c$ at the pivot, $p_{\text{anchor}}(c)$ denotes the anchor branch's probability for $c$ at the pivot, and is set to $P_{\text{anchor}}^{\max}$ when $c=[\texttt{MASK}]$, $\lambda \geq 0$ is the plausibility weight balancing current plausibility against future benefit.

In Eq.~(\ref{eq:score}), the first addition term inside the braces is the mean entropy over the remaining masked positions of each branch, reflecting how much entropy the pivot assignment reduces. Specifically, lower is better, signaling a stronger \effectname{} for more parallel commits in subsequent steps. The second addition term inside the braces acts as plausibility regularization to avoid trivial tokens---for instance, a premature end-of-sequence prediction may suppress downstream entropy simply by collapsing future uncertainty. Moreover, as the anchor branch keeps the pivot masked, $p_{\text{anchor}}(c)$ is evaluated without conditioning on the assignment, providing an useful independent quality signal.
% For the $[\texttt{MASK}]$ candidate, the pivot remains unresolved. We assign its plausibility term the log-probability of the anchor branch's top-1 prediction at the pivot, intentionally giving abstention a systematic advantage. This reflects the conservative default of postponing commitment: a candidate is assigned only when its expected downstream benefit is sufficient to justify early commitment.

\paragraph{Integration with standard decoding.}
At each decoding step, the standard commit rule first produces a commit set $\mathcal{S}$ and assigns greedy predictions to these positions. \methodshort{} then performs its pivot search on the remaining masked positions $\mathcal{M}$: it selects the pivot, constructs the anchor and candidate branches, and evaluates them jointly in a single forward pass using the branch-isolating attention mask in Fig~\ref{fig:overview}. Each branch is isolated from all others so that it faithfully simulates what the model would predict if only that particular token were assigned at the pivot. To avoid redundant computation, the predictive distributions from the selected branch are carried over to the next decoding step. The full per-step procedure is summarized in Appendix~\ref{app:algorithm}.

\subsection{Analysis of the Lookahead Objective}
\label{sec:objective-analysis}

We next analyze the lookahead objective in Eq.~(\ref{eq:score}). This analysis does not attempt to derive the empirically motivated pivot rule; instead, it characterizes how the two scoring terms relate to the speed--quality trade-off after a pivot has been selected. For brevity, write the downstream-position count $n=|\mathcal M|-1$ and the mean downstream entropy
$\bar H_c=\frac{1}{n}\sum_{i\in\mathcal M\setminus\{i^\star\}}H(p_i^c)$ for branch $c$.

\begin{proposition}[Entropy-certified parallelism]
\label{prop:entropy-parallelism}
Assume $n>0$. For a confidence threshold $\tau\in[1/2,1)$, define the eligible-commit count $N_\tau(c)=\sum_{i\in\mathcal M\setminus\{i^\star\}} \mathbb{I}\!\left[\max_v p_i^c(v)\geq\tau\right]$.
% \begin{equation*}
%   N_\tau(c)=\sum_{i\in\mathcal M\setminus\{i^\star\}}
%   \mathbb{I}\!\left[\max_v p_i^c(v)\geq\tau\right].
% \end{equation*}
With binary entropy $h(\tau)=-\tau\log\tau-(1-\tau)\log(1-\tau)$,
\begin{equation}
  N_\tau(c)
  \geq
  \max\!\left\{0,\;
  n-\left\lfloor\frac{n\bar H_c}{h(\tau)}\right\rfloor
  \right\}.
  \label{eq:parallelism-bound}
\end{equation}
\end{proposition}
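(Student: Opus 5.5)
The plan is a Markov-type counting argument built on one pointwise lemma: every downstream position that fails the confidence test has entropy at least $h(\tau)$. Write $q_i=\max_v p_i^c(v)$. If $q_i<\tau$ with $\tau\in[1/2,1)$, I will show $H(p_i^c)\ge h(q_i)\ge h(\tau)$.

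For the first inequality, $H(p)\ge h(\max_v p(v))$, I use the grouping property of entropy. Merge all non-top tokens into a single outcome. Then
\[
H(p)=h(q)+(1-q)H(\tilde p)\ge h(q),
\]
where $\tilde p$ is the normalized distribution on the remaining tokens.

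For the second inequality, note that $q_i\ge 1/|\mathcal V|$, so $q_i$ may lie below $1/2$. I split into two cases.
\begin{itemize}
\item If $q_i\in[1/2,\tau)$, then $h$ is decreasing on $[1/2,1)$, so $h(q_i)\ge h(\tau)$.
\item If $q_i<1/2$, I do not go through $h(q_i)$ at all, since $h(q_i)$ could be small when $q_i$ is tiny. Instead I use min-entropy: $H(p)\ge -\log q_i>\log 2=h(1/2)\ge h(\tau)$.
\end{itemize}
In both cases $H(p_i^c)\ge h(\tau)$.

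The counting step follows. Let $F=n-N_\tau(c)$ be the number of failing positions. All entropies are nonnegative, so
\[
F\,h(\tau)\le\sum_{i\in\mathcal M\setminus\{i^\star\}}H(p_i^c)=n\bar H_c .
\]
Since $\tau<1$, we have $h(\tau)>0$, which gives $F\le n\bar H_c/h(\tau)$. Because $F$ is an integer, $F\le\lfloor n\bar H_c/h(\tau)\rfloor$. Rearranging yields $N_\tau(c)\ge n-\lfloor n\bar H_c/h(\tau)\rfloor$, and combining with the trivial bound $N_\tau(c)\ge 0$ gives the stated maximum.

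The main obstacle is the pointwise lemma in the regime $q_i<1/2$. There the binary-entropy route alone fails, so the case split with the min-entropy bound is essential. I also need to check that $\tau\ge 1/2$ is used only to get the monotonicity of $h$ and the bound $h(\tau)\le\log 2$. The remaining steps are routine.
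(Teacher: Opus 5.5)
Your proposal is correct and follows the paper's proof essentially step for step. It uses the same case split at $q_i=1/2$: the grouping decomposition $H(p_i^c)=h(q_i)+(1-q_i)H(\widetilde p_i^c)$ together with monotonicity of $h$ on $[1/2,1]$ in one case, and the min-entropy bound $H(p_i^c)\ge -\log q_i>\log 2\ge h(\tau)$ in the other, followed by summing over the failing positions and using integrality to pass to the floor. Your explicit check that $h(\tau)>0$ for $\tau<1$, which is needed before dividing, is a detail the paper leaves implicit.
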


$N_\tau(c)$ counts the positions eligible for commitment by a confidence-aware decoder in the next step. Proposition~\ref{prop:entropy-parallelism} establishes that reducing the mean downstream entropy $\bar H_c$ monotonically tightens a certified lower bound on this number. The future-benefit term in Eq.~(\ref{eq:score}) therefore has a direct speed interpretation rather than serving only as a generic uncertainty heuristic. The guarantee is deliberately conservative: it lower-bounds one-step commit opportunities but neither predicts the exact number of commits nor directly bounds end-to-end NFE.

\begin{proposition}[Plausibility-adjusted selection margin]
\label{prop:selection-margin}
Let $a_c$ denote the effective anchor plausibility used by the scoring rule, where $a_c=p_{\text{anchor}}(c)$ if $c\neq[\texttt{MASK}]$, otherwise $a_c = \max_v p_{\text{anchor}}(v)$ when $c=[\texttt{MASK}]$.
% \begin{equation*}
%   a_c=
%   \begin{cases}
%     p_{\text{anchor}}(c), & c\neq[\texttt{MASK}],\\
%     \max_v p_{\text{anchor}}(v), & c=[\texttt{MASK}],
%   \end{cases}
% \end{equation*}
Maximizing Eq.~(\ref{eq:score}) is equivalent to
\begin{equation}
  c^\star=\arg\min_{c\in\mathcal C}\left\{\bar H_c-\lambda\log a_c\right\}.
  \label{eq:lagrangian-score}
\end{equation}
Moreover, for any $c,d\in\mathcal C$, candidate $c$ scores at least as high as $d$ if and only if
\begin{equation}
  \bar H_d-\bar H_c
  \geq \lambda\log\frac{a_d}{a_c}.
  \label{eq:selection-margin}
\end{equation}
\end{proposition}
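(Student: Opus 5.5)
The argument is pure algebra on Eq.~(\ref{eq:score}). First I rewrite the objective in the notation of Proposition~\ref{prop:selection-margin}. By the definition of $n=|\mathcal M|-1$, the first term inside the braces is exactly $-\bar H_c$. The plausibility term is $\lambda\log p_{\text{anchor}}(c)$ for $c\neq[\texttt{MASK}]$. For $c=[\texttt{MASK}]$, the convention stated after Eq.~(\ref{eq:score}) sets $p_{\text{anchor}}(c)=P^{\max}_{\text{anchor}}=\max_v p_{\text{anchor}}(v)$. In both cases the term equals $\lambda\log a_c$, so the score of candidate $c$ is
\[
S(c)=-\bar H_c+\lambda\log a_c .
\]

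Next I note that $\arg\max_{c} S(c)=\arg\min_{c}\{-S(c)\}$, because negation reverses the order on the finite set $\mathcal C$. Since $-S(c)=\bar H_c-\lambda\log a_c$, this gives Eq.~(\ref{eq:lagrangian-score}). If there are ties, I read both $\arg\max$ and $\arg\min$ as the same set of optimizers, or as the same tie-breaking rule, so the equivalence holds at the level of sets.

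For the pairwise margin I write $S(c)\ge S(d)$ and rearrange:
\[
-\bar H_c+\lambda\log a_c\ \ge\ -\bar H_d+\lambda\log a_d
\iff \bar H_d-\bar H_c\ \ge\ \lambda(\log a_d-\log a_c)=\lambda\log\frac{a_d}{a_c}.
\]
Every step is an addition of the same finite quantity to both sides, so each implication is reversible and the statement is an ``if and only if.''

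The only real obstacle is well-definedness: every $\log a_c$ must be finite. I plan to argue this from the construction of $\mathcal C$. Every non-mask candidate satisfies $p_{i^\star}(v)\ge r P^{\max}_{i^\star}>0$. If the anchor distribution coincides with the carried-over $p_{i^\star}$, or more generally is a softmax output with full support, then $a_c>0$. The mask candidate has $a_c=\max_v p_{\text{anchor}}(v)>0$ trivially. If a zero probability could occur, I would adopt the convention $\log 0=-\infty$ and handle the case separately. Such a candidate can never be selected when $\lambda>0$. When $\lambda=0$ the plausibility term is set to $0$, and the margin condition reduces to $\bar H_d\ge\bar H_c$.
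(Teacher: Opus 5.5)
Your proposal is correct and matches the paper's proof step for step: the paper also sets $S(c)=-\bar H_c+\lambda\log a_c$, turns $\arg\max S$ into $\arg\min(-S)$, and rearranges $S(c)\geq S(d)$ through reversible additions to reach $\bar H_d-\bar H_c\geq\lambda\log(a_d/a_c)$. Your remarks on ties and on positivity of $a_c$ are a harmless refinement that the paper leaves implicit; the full-support softmax argument is the one that actually applies, because the anchor branch is evaluated after the standard commit set $\mathcal S$ has been applied, so its pivot distribution need not coincide with $p_{i^\star}$.
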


Eq.~(\ref{eq:lagrangian-score}) is a Lagrangian relaxation of minimizing downstream entropy under a candidate-surprisal budget. Proposition~\ref{prop:selection-margin} makes the resulting safeguard explicit: relative to a more plausible candidate, a less plausible candidate must compensate for its plausibility deficit with a proportionally larger entropy reduction. When $d=[\texttt{MASK}]$, the same margin governs whether RPS commits or abstains. Together, Propositions~\ref{prop:entropy-parallelism} and~\ref{prop:selection-margin} separate the two roles of the scoring function: the entropy term promotes certifiable next-step parallelism, while the anchor term controls the evidence required to take that acceleration opportunity. Appendix~\ref{app:objective-proof} provides both proofs.

\subsection{Discussion}
\label{sec:method-discussion}

As compared in Fig.~\ref{fig:ripple} (left), although \methodshort{} shares the high-level goal of accelerating dLLM decoding with lookahead evaluation, recent lookahead methods~\citep{DBLP:journals/corr/abs-2512-16229,DBLP:journals/corr/abs-2511-21103} mainly refine \emph{where} to commit: LoPA searches high-confidence residual positions, whereas ETE targets positions near a prescribed confidence level. \methodshort{} instead selects mid-entropy pivots via truncated entropy. More importantly, unlike LoPA and ETE, which retain greedy top-1 assignment, \methodshort{} also refines \emph{what} to commit by lookahead evaluation over plausible token candidates, motivated by our finding that the correct token is often non-top-1 where the \effectname{} is strongest.

% \methodshort{} shares the high-level goal of accelerating dLLM decoding with several contemporaneous lookahead-based methods~\citep{DBLP:journals/corr/abs-2512-16229,DBLP:journals/corr/abs-2511-21103}. Both LoPA and ETE refine the \emph{where} decision by using lookahead forward passes to identify positions worth committing early, but they define their position candidates differently. LoPA explores the highest-confidence positions that remain after the standard decoding update, whereas ETE targets positions near a prescribed medium-confidence level. \methodshort{} instead identifies the mid-entropy regime from the predictive distribution: it excludes positions whose probability mass is too diffuse for reliable candidate search and then selects the position with the highest truncated entropy. More importantly, neither LoPA nor ETE revisits the \emph{what} decision: once a position is selected, it is directly assigned its top-1 prediction. As our oracle analysis shows, the correct token is often not top-1 precisely in the mid-entropy regime where the \effectname{} is strongest. \methodshort{} therefore complements entropy-based pivot selection with lookahead evaluation over plausible token assignments and a built-in plausibility safeguard.

%% file: sections/experiments.tex
\section{Experiments}
\label{sec:experiments}

\paragraph{Setup.}
\label{sec:exp-setup}
We evaluate two representative dLLM families: LLaDA family, which is trained from scratch, and Dream family, which is adapted from an autoregressive language model. Our evaluation covers mathematical reasoning (GSM8K~\citep{DBLP:journals/corr/abs-2110-14168} and MATH500~\citep{DBLP:conf/iclr/LightmanKBEBLLS24}) and code generation (HumanEval~\citep{DBLP:journals/corr/abs-2107-03374} and MBPP~\citep{DBLP:journals/corr/abs-2108-07732}). We compare {\methodshort} with the standard one-token-per-step \textbf{Default} decoder and five parallel decoding or sampling baselines: \textbf{Confidence}~\citep{DBLP:journals/corr/abs-2505-22618}, \textbf{KLASS}~\citep{DBLP:journals/corr/abs-2511-05664}, \textbf{EB-Sampler}~\citep{DBLP:journals/corr/abs-2505-24857}, \textbf{WINO}~\citep{DBLP:journals/corr/abs-2507-18578}, and \textbf{LoPA}~\citep{DBLP:journals/corr/abs-2512-16229}. For evaluation,
we use lm-evaluation-harness\footnote{\href{https://github.com/EleutherAI/lm-evaluation-harness}{https://github.com/EleutherAI/lm-evaluation-harness}} with its standard task implementations and prompting configurations: 5-shot for GSM8K, 4-shot for MATH500, 0-shot for HumanEval, and 3-shot for MBPP. Unless stated otherwise, the generation length is 256 and the block length is 32. We measure generation quality by accuracy and efficiency by both the number of function evaluations (NFE), which captures sequential model evaluations, and tokens per second (TPS), which captures end-to-end throughput. Speedups are computed relative to Default under the same model, task, and generation length.

\paragraph{Implementation details.}
We build on the Fast-dLLM inference stack~\citep{DBLP:journals/corr/abs-2505-22618} and evaluate LLaDA-8B-Instruct,
% \footnote{\href{https://huggingface.co/GSAI-ML/LLaDA-8B-Instruct}{https://huggingface.co/GSAI-ML/LLaDA-8B-Instruct}}, 
Dream-v0-Instruct-7B,
% \footnote{\href{https://huggingface.co/Dream-org/Dream-v0-Instruct-7B}{https://huggingface.co/Dream-org/Dream-v0-Instruct-7B}}, 
and LLaDA-1.5,
% \footnote{\href{https://huggingface.co/GSAI-ML/LLaDA-1.5}{https://huggingface.co/GSAI-ML/LLaDA-1.5}}, 
for 12 model--benchmark configurations in total. For brevity, we refer to LLaDA-8B-Instruct and Dream-v0-Instruct-7B as LLaDA and Dream, respectively, throughout this section. We report the main-text results on LLaDA and Dream, with additional results on LLaDA-1.5 provided in Appendix~\ref{app:llada15}. Unless otherwise noted, {\methodshort} uses $k_{\max}=10$, reachability ratio $r=0.1$, and probability-mass threshold $\tau_{\text{pivot}}=0.9$ for LLaDA and 0.95 for Dream. We select the plausibility weight $\lambda$ from the interval $[0.1,0.5]$ identified in \S\ref{sec:exp-aw-sweep}. Appendix~\ref{app:implementation} reports baseline configurations and hardware details.

\subsection{Main Results}
\label{sec:exp-main}

\paragraph{Comparison with parallel decoding baselines.}
We compare {\methodshort} with the standard one-token-per-step decoder and several state-of-the-art parallel decoding methods across two model families and four benchmarks, with results reported in Table~\ref{tab:main-results}. {\methodshort} remains in the highest-throughput regime among parallel decoders, delivering 4.24--9.80$\times$ TPS speedup over Default while largely preserving generation quality. In several settings, acceleration even comes with improved accuracy: on LLaDA, {\methodshort} exceeds Default by 2.2\% on MBPP, and on HumanEval it improves accuracy by 1.22 \% for both LLaDA and Dream. These results demonstrate that jointly selecting a mid-entropy pivot and evaluating plausible token assignments with an explicit plausibility safeguard enables aggressive parallel commitment while preserving or improving generation quality in most settings.
Among the parallel baselines, LoPA is closest to {\methodshort} in decoding speed, as it likewise uses lookahead to guide early commitment. However, LoPA consistently loses accuracy relative to Default on the code-generation benchmarks. The gap is especially clear on HumanEval, where {\methodshort} outperforms LoPA by 4.27 \% on LLaDA and 5.49 \% on Dream at comparable throughput. Section~\ref{sec:exp-case} provides a dedicated failure-mode analysis of this behavior.

\begin{table}[t]
  \centering
  \caption{Results on LLaDA and Dream. \textbf{Bold} indicates the best result.}
  \label{tab:main-results}
  \label{tab:main-llada8b}
  \label{tab:main-dream}
  \vspace{-10pt}
  \scriptsize
  \setlength{\tabcolsep}{2.4pt}
  \resizebox{\textwidth}{!}{%
  \begin{tabular}{@{}ll ccccc ccccc@{}}
  \toprule
  \multirow{2}{*}{\textbf{Benchmark}} & \multirow{2}{*}{\textbf{Method}}
  & \multicolumn{5}{c}{\textbf{LLaDA-8B-Instruct}}
  & \multicolumn{5}{c}{\textbf{Dream-v0-Instruct-7B}} \\
  \cmidrule(lr){3-7}\cmidrule(lr){8-12}
  & & \textbf{Acc} & \textbf{NFE} & \textbf{NFE Sp.} & \textbf{TPS} & \textbf{TPS Sp.}
  & \textbf{Acc} & \textbf{NFE} & \textbf{NFE Sp.} & \textbf{TPS} & \textbf{TPS Sp.} \\
  \midrule
  \multirow{7}{*}{\shortstack[l]{GSM8K\\(5-shot)}}
  & Default & 79.00 & 256.0 & 1.00$\times$ & 4.02 & 1.00$\times$ & 78.70 & 256.0 & 1.00$\times$ & 5.14 & 1.00$\times$ \\
  & Confidence & \textbf{79.45} & 77.80 & 3.29$\times$ & 13.21 & 3.29$\times$ & 77.33 & 66.05 & 3.88$\times$ & 19.16 & 3.73$\times$ \\
  & KLASS & 79.30 & 131.93 & 1.94$\times$ & 7.88 & 1.96$\times$ & \textbf{79.38} & 120.87 & 2.12$\times$ & 10.47 & 2.04$\times$ \\
  & EB-Sampler & \textbf{79.45} & 87.73 & 2.92$\times$ & 11.88 & 2.96$\times$ & 79.30 & 108.15 & 2.37$\times$ & 11.86 & 2.31$\times$ \\
  & WINO & 78.24 & 57.37 & 4.46$\times$ & 17.70 & 4.40$\times$ & 76.12 & 61.90 & 4.14$\times$ & 19.43 & 3.78$\times$ \\
  & LoPA & 78.70 & 41.95 & 6.10$\times$ & 22.98 & 5.68$\times$ & 75.66 & \textbf{34.62} & \textbf{7.40$\times$} & \textbf{31.31} & \textbf{6.09$\times$} \\
  & {\methodshort} & 79.15 & \textbf{35.73} & \textbf{7.17$\times$} & \textbf{26.66} & \textbf{6.63$\times$} & 78.62 & 41.74 & 6.13$\times$ & 28.21 & 5.49$\times$ \\
  \midrule
  \multirow{7}{*}{\shortstack[l]{HumanEval\\(0-shot)}}
  & Default & 41.46 & 256.0 & 1.00$\times$ & 14.24 & 1.00$\times$ & 57.32 & 256.0 & 1.00$\times$ & 9.13 & 1.00$\times$ \\
  & Confidence & 42.07 & 78.83 & 3.25$\times$ & 45.93 & 3.23$\times$ & 58.54 & 82.98 & 3.08$\times$ & 28.51 & 3.12$\times$ \\
  & KLASS & 41.46 & 128.82 & 1.99$\times$ & 28.75 & 2.02$\times$ & \textbf{60.98} & 125.45 & 2.04$\times$ & 19.26 & 2.11$\times$ \\
  & EB-Sampler & 41.46 & 91.46 & 2.80$\times$ & 40.16 & 2.82$\times$ & 60.37 & 110.01 & 2.33$\times$ & 21.98 & 2.41$\times$ \\
  & WINO & 40.24 & 73.67 & 3.47$\times$ & 47.83 & 3.36$\times$ & 57.93 & 79.37 & 3.23$\times$ & 26.68 & 2.92$\times$ \\
  & LoPA & 38.41 & 40.25 & 6.36$\times$ & 76.42 & 5.37$\times$ & 53.05 & \textbf{40.36} & \textbf{6.34$\times$} & \textbf{43.02} & \textbf{4.71$\times$} \\
  & {\methodshort} & \textbf{42.68} & \textbf{37.22} & \textbf{6.88$\times$} & \textbf{82.98} & \textbf{5.83$\times$} & 58.54 & 48.33 & 5.30$\times$ & 41.32 & 4.53$\times$ \\
  \midrule
  \multirow{7}{*}{\shortstack[l]{MATH500\\(4-shot)}}
  & Default & 38.40 & 256.0 & 1.00$\times$ & 6.02 & 1.00$\times$ & 44.60 & 256.0 & 1.00$\times$ & 7.84 & 1.00$\times$ \\
  & Confidence & 38.60 & 98.85 & 2.59$\times$ & 15.47 & 2.57$\times$ & \textbf{45.60} & 93.90 & 2.73$\times$ & 21.17 & 2.70$\times$ \\
  & KLASS & 38.20 & 157.05 & 1.63$\times$ & 9.92 & 1.65$\times$ & 44.20 & 142.56 & 1.80$\times$ & 14.28 & 1.82$\times$ \\
  & EB-Sampler & 38.60 & 210.50 & 1.22$\times$ & 7.45 & 1.24$\times$ & 44.20 & 100.29 & 2.55$\times$ & 20.36 & 2.60$\times$ \\
  & WINO & \textbf{38.80} & 73.15 & 3.50$\times$ & 20.75 & 3.45$\times$ & 44.80 & 82.60 & 3.10$\times$ & 22.31 & 2.84$\times$ \\
  & LoPA & 38.20 & 70.36 & 3.64$\times$ & 21.21 & 3.53$\times$ & 44.20 & 64.39 & 3.98$\times$ & 27.62 & 3.52$\times$ \\
  & {\methodshort} & 38.20 & \textbf{48.32} & \textbf{5.30$\times$} & \textbf{29.09} & \textbf{4.83$\times$} & 44.80 & \textbf{55.61} & \textbf{4.60$\times$} & \textbf{33.27} & \textbf{4.24$\times$} \\
  \midrule
  \multirow{7}{*}{\shortstack[l]{MBPP\\(3-shot)}}
  & Default & 30.60 & 256.0 & 1.00$\times$ & 3.99 & 1.00$\times$ & \textbf{58.80} & 256.0 & 1.00$\times$ & 2.40 & 1.00$\times$ \\
  & Confidence & 30.60 & 69.82 & 3.67$\times$ & 14.72 & 3.69$\times$ & 56.20 & 39.61 & 6.46$\times$ & 14.71 & 6.13$\times$ \\
  & KLASS & 30.60 & 111.32 & 2.30$\times$ & 9.41 & 2.36$\times$ & 55.20 & 61.58 & 4.16$\times$ & 8.87 & 3.70$\times$ \\
  & EB-Sampler & 30.80 & 79.64 & 3.22$\times$ & 13.07 & 3.28$\times$ & 55.00 & 38.34 & 6.68$\times$ & 14.17 & 5.90$\times$ \\
  & WINO & 30.00 & 67.63 & 3.79$\times$ & 15.06 & 3.77$\times$ & 56.40 & 42.03 & 6.09$\times$ & 13.11 & 5.46$\times$ \\
  & LoPA & 29.20 & 37.83 & 6.77$\times$ & 23.25 & 5.83$\times$ & 55.60 & 25.19 & 10.16$\times$ & 21.75 & 9.06$\times$ \\
  & {\methodshort} & \textbf{32.80} & \textbf{29.92} & \textbf{8.56$\times$} & \textbf{25.10} & \textbf{6.29$\times$} & 56.80 & \textbf{23.81} & \textbf{10.76$\times$} & \textbf{23.52} & \textbf{9.80$\times$} \\
  \bottomrule
  \end{tabular}%
  }
  \vspace{-10pt}
\end{table}

\begin{table}[t]
\centering
\caption{Generation-length robustness on LLaDA and Dream. 
% \textbf{Bold} indicates the best result.
}
\label{tab:length}
\vspace{-10pt}
\scriptsize
\setlength{\tabcolsep}{2.4pt}
\resizebox{\textwidth}{!}{%
\begin{tabular}{@{}lll ccc >{\columncolor{black!5}}c >{\columncolor{black!5}}c >{\columncolor{black!5}}c ccc@{}}
\toprule
\multirow{2}{*}{\textbf{Model}} & \multirow{2}{*}{\textbf{Benchmark}} & \multirow{2}{*}{\textbf{Method}}
& \multicolumn{3}{c}{$\boldsymbol{L=128}$} & \multicolumn{3}{>{\columncolor{black!5}}c}{$\boldsymbol{L=256}$} & \multicolumn{3}{c}{$\boldsymbol{L=512}$} \\
\cmidrule(lr){4-6}\cmidrule(lr){7-9}\cmidrule(lr){10-12}
& & & \textbf{Acc} & \textbf{NFE Sp.} & \textbf{TPS Sp.}
& \textbf{Acc} & \textbf{NFE Sp.} & \textbf{TPS Sp.}
& \textbf{Acc} & \textbf{NFE Sp.} & \textbf{TPS Sp.} \\
\midrule
\multirow{8}{*}{\shortstack[l]{LLaDA-8B-\\Instruct}}
& \multirow{4}{*}{\shortstack[l]{HumanEval\\(0-shot)}}
& Default & \textbf{30.49} & 1.00$\times$ & 1.00$\times$ & 41.46 & 1.00$\times$ & 1.00$\times$ & \textbf{43.29} & 1.00$\times$ & 1.00$\times$ \\
& & Confidence & 29.27 & 3.59$\times$ & 3.49$\times$ & 42.07 & 3.25$\times$ & 3.23$\times$ & 41.46 & 3.15$\times$ & 3.15$\times$ \\
& & LoPA & 22.56 & 4.79$\times$ & 4.39$\times$ & 38.41 & 6.36$\times$ & 5.37$\times$ & 40.85 & 6.32$\times$ & 5.19$\times$ \\
& & {\methodshort} & 28.66 & \textbf{6.49$\times$} & \textbf{5.26$\times$} & \textbf{42.68} & \textbf{6.88$\times$} & \textbf{5.83$\times$} & 42.07 & \textbf{6.48$\times$} & \textbf{5.45$\times$} \\
\cmidrule(lr){2-12}
& \multirow{4}{*}{\shortstack[l]{MATH500\\(4-shot)}}
& Default & 34.40 & 1.00$\times$ & 1.00$\times$ & 38.40 & 1.00$\times$ & 1.00$\times$ & 42.20 & 1.00$\times$ & 1.00$\times$ \\
& & Confidence & 34.20 & 2.13$\times$ & 2.13$\times$ & \textbf{38.60} & 2.59$\times$ & 2.57$\times$ & \textbf{42.40} & 3.42$\times$ & 3.38$\times$ \\
& & LoPA & 34.40 & 3.00$\times$ & 2.91$\times$ & 38.20 & 3.64$\times$ & 3.53$\times$ & 40.40 & 6.84$\times$ & 6.21$\times$ \\
& & {\methodshort} & \textbf{35.60} & \textbf{4.22$\times$} & \textbf{3.86$\times$} & 38.20 & \textbf{5.30$\times$} & \textbf{4.83$\times$} & \textbf{42.40} & \textbf{6.97$\times$} & \textbf{6.36$\times$} \\
\midrule
\multirow{8}{*}{\shortstack[l]{Dream-v0-7B-\\Instruct}}
& \multirow{4}{*}{\shortstack[l]{HumanEval\\(0-shot)}}
& Default & 58.54 & 1.00$\times$ & 1.00$\times$ & 57.32 & 1.00$\times$ & 1.00$\times$ & 53.66 & 1.00$\times$ & 1.00$\times$ \\
& & Confidence & \textbf{59.76} & 2.71$\times$ & 2.56$\times$ & \textbf{58.54} & 3.08$\times$ & 3.12$\times$ & 57.32 & 5.72$\times$ & 5.63$\times$ \\
& & LoPA & 53.66 & 4.14$\times$ & 3.30$\times$ & 53.05 & \textbf{6.34$\times$} & \textbf{4.71$\times$} & 53.66 & 9.43$\times$ & 6.90$\times$ \\
& & {\methodshort} & 58.54 & \textbf{4.20$\times$} & \textbf{3.73$\times$} & \textbf{58.54} & 5.30$\times$ & 4.53$\times$ & \textbf{58.54} & \textbf{9.77$\times$} & \textbf{8.19$\times$} \\
\cmidrule(lr){2-12}
& \multirow{4}{*}{\shortstack[l]{MATH500\\(4-shot)}}
& Default & 42.00 & 1.00$\times$ & 1.00$\times$ & 44.60 & 1.00$\times$ & 1.00$\times$ & 45.60 & 1.00$\times$ & 1.00$\times$ \\
& & Confidence & \textbf{42.80} & 2.08$\times$ & 2.09$\times$ & \textbf{45.60} & 2.73$\times$ & 2.70$\times$ & 47.80 & 4.31$\times$ & 4.30$\times$ \\
& & LoPA & 40.60 & 3.02$\times$ & 2.65$\times$ & 44.20 & 3.98$\times$ & 3.52$\times$ & 46.20 & 6.54$\times$ & 5.79$\times$ \\
& & {\methodshort} & 41.40 & \textbf{3.39$\times$} & \textbf{2.99$\times$} & 44.80 & \textbf{4.60$\times$} & \textbf{4.24$\times$} & \textbf{49.20} & \textbf{7.15$\times$} & \textbf{6.62$\times$} \\
\bottomrule
\end{tabular}%
}
\vspace{-10pt}
\end{table}

\paragraph{Robustness to generation length.}
The preceding experiments establish a strong quality--efficiency trade-off for {\methodshort} at the default generation length of 256. We next examine whether this advantage persists across different generation lengths. As shown in Table~\ref{tab:length}, {\methodshort} maintains the strongest quality--efficiency trade-off among the compared baselines across all evaluated lengths. A model-specific exception is HumanEval at $L=128$, where all LLaDA decoders exhibit lower accuracy than at longer lengths, while Dream does not show the same degradation. Inspection of the generated programs indicates that the LLaDA outputs are generally syntactically complete but often implement overly simplified or incomplete logic, suggesting that the restricted budget affects solution formation rather than merely truncating the code. Under this constraint, LoPA's aggressive early commitments further amplify the quality loss, reducing accuracy from 30.49 for Default to 22.56. In contrast, {\methodshort} retains 28.66 accuracy while achieving 6.49$\times$ NFE and 5.26$\times$ TPS speedups, limiting the additional degradation caused by parallel decoding.

\subsection{Hyperparameter Ablations}
\label{sec:exp-hparams}

\paragraph{Plausibility-weight sensitivity.}
\label{sec:exp-aw-sweep}
We isolate the contribution of the plausibility safeguard by sweeping $\lambda \in \{0.0,0.1,0.2,0.3,0.4,0.5\}$ on LLaDA GSM8K, as shown in Fig.~\ref{fig:ablation-summary} (left). Enabling the safeguard with any positive $\lambda$ substantially improves accuracy over entropy-only scoring ($\lambda=0$); in particular, $\lambda=0.1$ yields a 2.1\% gain. This consistent improvement demonstrates that branch-external plausibility is necessary to prevent lookahead from favoring decisive but incorrect token assignments. Meanwhile, accuracy varies by only 1.1\% across $\lambda\in[0.1,0.5]$, with little change in TPS speedup. The scoring function is therefore robust once the safeguard is enabled and does not require extensive scenario-specific hyperparameter tuning.

\begin{figure*}[t]
  \vspace{-5pt}
  \centering

  % Left panel
  \begin{minipage}[t]{0.35\textwidth}
    \vspace{0pt}
    \centering
    \includegraphics[width=\linewidth]{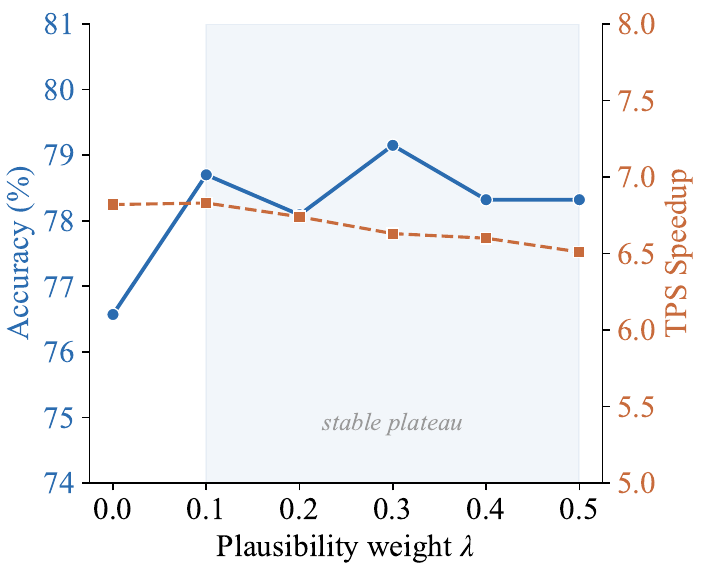}
  \end{minipage}
  \hfill
  % Middle panel
  \begin{minipage}[t]{0.37\textwidth}
    \vspace{0pt}
    \centering
    \includegraphics[width=\linewidth]{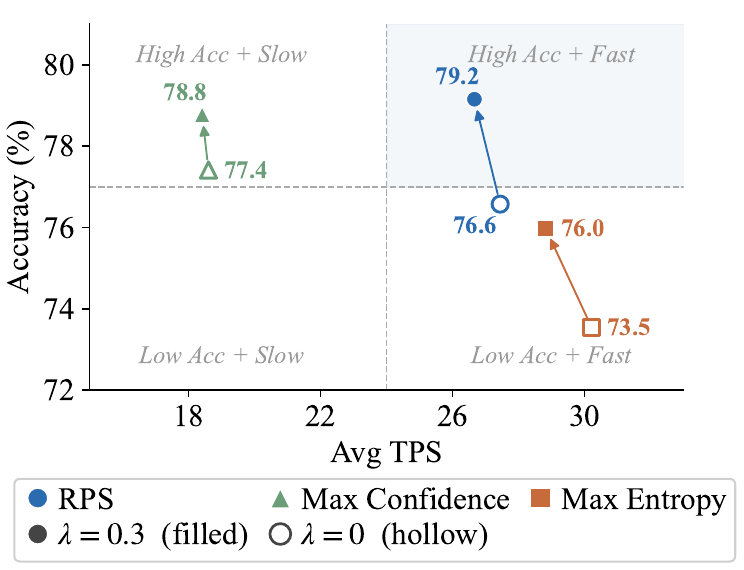}
  \end{minipage}
  \hfill
  % Right panel
  \begin{minipage}[t]{0.23\textwidth}
    \vspace{0pt}
    \centering
    \includegraphics[width=\linewidth]{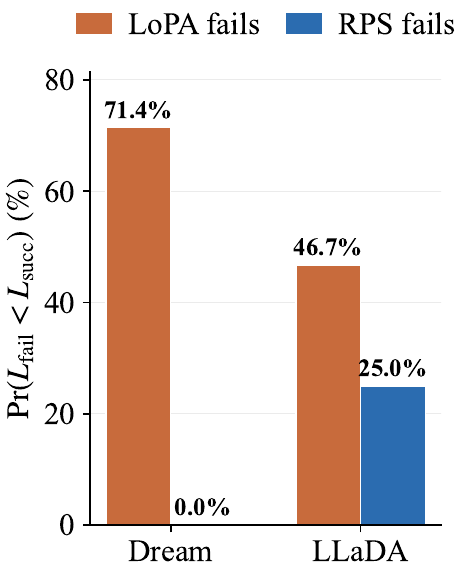}
  \end{minipage}
  \vspace{-5pt}
  \caption{
      \textbf{Left:} Accuracy and TPS speedup of RPS on GSM8K with LLaDA under different plausibility weights $\lambda$.
      \textbf{Middle:} Accuracy and TPS trade-off of different pivot-selection and scoring strategies on GSM8K with LLaDA
      \textbf{Right:} Failure-length comparison on HumanEval for Dream and
      LLaDA. For cases where exactly one of LoPA and {\methodshort} succeeds, we report the fraction for which the failing completion contains fewer non-empty source lines than the successful completion.
  }
  \label{fig:ablation-summary}

  \vspace{-15pt}
\end{figure*}

\paragraph{Search-space hyperparameters.}
\begin{wraptable}{r}{0.52\textwidth}
  \centering
  \vspace{-12pt}
  \caption{Hyperparameter sensitivity on LLaDA GSM8K. One parameter is varied at a time; \textbf{bold} marks the setting used in the main experiments.}
  \label{tab:hparam-sweep}
  \vspace{-8pt}
  \scriptsize
  \setlength{\tabcolsep}{1.8pt}
  \begin{tabular}{@{}llrrrr@{}}
  \toprule
  \textbf{Hyperparameter} & \textbf{Value} & \textbf{Acc (\%)} & \textbf{Avg. NFE} & \textbf{TPS} & \textbf{Avg. cand.} \\
  \midrule
  \multirow{3}{*}{$k_{\max}$}
  & 5 & 78.70 & 42.09 & 23.24 & 3.00 \\
  & \textbf{10} & \textbf{79.15} & \textbf{35.73} & \textbf{26.66} & \textbf{3.87} \\
  & 20 & 74.91 & 32.02 & 28.30 & 5.16 \\
  \midrule
  \multirow{4}{*}{$r$}
  & 0.05 & 78.62 & 35.15 & 26.31 & 4.77 \\
  & \textbf{0.10} & \textbf{79.15} & \textbf{35.73} & \textbf{26.66} & \textbf{3.87} \\
  & 0.20 & 78.70 & 37.51 & 26.06 & 3.00 \\
  & 0.30 & 77.41 & 39.93 & 24.90 & 2.55 \\
  \midrule
  \multirow{5}{*}{$\tau_{\text{pivot}}$}
  & 0.60 & 78.17 & 31.12 & 29.26 & 5.05 \\
  & 0.70 & 77.33 & 31.38 & 28.93 & 5.02 \\
  & 0.80 & 78.62 & 32.26 & 28.54 & 4.70 \\
  & \textbf{0.90} & \textbf{79.15} & \textbf{35.73} & \textbf{26.66} & \textbf{3.87} \\
  & 0.95 & 77.86 & 40.62 & 23.90 & 3.20 \\
  \bottomrule
  \end{tabular}
  \vspace{-8pt}
  \end{wraptable}
We further analyze the three hyperparameters governing the pivot-search space (Table~\ref{tab:hparam-sweep}). The candidate budget $k_{\max}$ caps the support: overly large values admit weak candidates and hurt accuracy, while overly small values limit useful alternatives and slow decoding. The reachability ratio $r$ further prunes tokens relative to the top-1 probability and is robust over a broad range, though aggressive pruning may remove plausible assignments. The threshold $\tau_{\text{pivot}}$ controls pivot reliability: low values admit overly diffuse positions, whereas high values make selection too conservative and reduce speed. Since these parameters impose structural search constraints rather than task-specific preferences, we fix $k_{\max}$ and $r$ globally and use a single $\tau_{\text{pivot}}$ across all tasks within each model family, without any tuning.
% We further study the three hyperparameters that control the pivot-search space, with results reported in Table~\ref{tab:hparam-sweep}. The candidate budget $k_{\max}$ sets the upper bound of the truncated support. An overly large budget ($k_{\max}=20$) admits more weak candidate tokens and reduces accuracy by 4.24 points, whereas a smaller budget leaves accuracy nearly unchanged but noticeably slows decoding by excluding useful alternatives for early commitment. The reachability ratio $r$ complements this hard cap by pruning tokens relative to the top-1 probability; performance remains stable over a broad range, but overly aggressive pruning can discard plausible assignments. The probability-mass threshold $\tau_{\text{pivot}}$ instead determines whether a position is sufficiently concentrated for reliable intervention: a low threshold admits diffuse, high-uncertainty pivots, while a high threshold makes pivot selection overly conservative and reduces speed. These hyperparameters encode structural constraints on the search rather than task-specific preferences. We therefore keep $k_{\max}$ and $r$ fixed globally and use a single $\tau_{\text{pivot}}$ for all tasks within each model family, without task-specific tuning.

% ===========================================================================
\subsection{Further Analysis}
\label{sec:exp-analysis}

\paragraph{Pivot strategy and scoring ablation.}
\label{sec:exp-pivot-ablation}
Fig.~\ref{fig:ablation-summary} (middle) organizes the compared configurations into four accuracy--speed regimes. Only the complete {\methodshort} design lies in the desirable high-accuracy and fast quadrant. Max Confidence retains accuracy but remains slow because it selects positions that are already nearly resolved and thus induces weaker ripple effects. Conversely, unconstrained Max Entropy is fast but inaccurate because it admits positions whose distributions are too diffuse for reliable candidate search. The proposed reachability constraint targets the tractable mid-entropy region between these extremes, while plausibility-aware scoring further lifts accuracy with negligible throughput change. Together, the pivot-selection strategy and lookahead scoring enable the best quality--speed trade-off among the compared designs.

\paragraph{Failure-mode analysis on HumanEval.}
\label{sec:exp-case}
To better understand why {\methodshort} attains higher accuracy than LoPA on zero-shot HumanEval, we examine the programs generated by the two methods. We focus on cases where exactly one method succeeds and compare the number of non-empty source lines in the failing and successful completions. Fig.~\ref{fig:ablation-summary} (right) reveals a clear asymmetry: a shorter failing program is substantially more common when LoPA fails than when {\methodshort} fails, suggesting that LoPA is more prone to terminating a plausible-looking solution before completing the required logic. Fig.~\ref{fig:case-cache-summary} (left) illustrates this behavior on HumanEval/57 (monotonic). LoPA commits \texttt{return} early in decoding, closing the program before it accounts for monotonically decreasing inputs. {\methodshort} makes more conservative early commitments and later completes the complementary state logic, thereby avoiding this premature termination. This example reflects a broader failure mode of confidence-driven lookahead: LoPA can favor a wrong-but-decisive token because it immediately collapses downstream uncertainty. {\methodshort} instead combines cautious pivot selection with a plausibility safeguard that screens the quality of an early commitment. This distinction is especially consequential in code generation, where control-flow tokens such as \texttt{return} and \texttt{break} directly alter the execution path. An incorrect commitment can therefore produce a program that is syntactically valid and executable, yet globally incomplete or logically wrong---an error that subsequent decoding cannot easily repair.

\begin{figure*}[t]
  \centering

  % Left panel
  \begin{minipage}[t]{0.46\textwidth}
    \vspace{0pt}
    \centering
    \includegraphics[width=\linewidth]{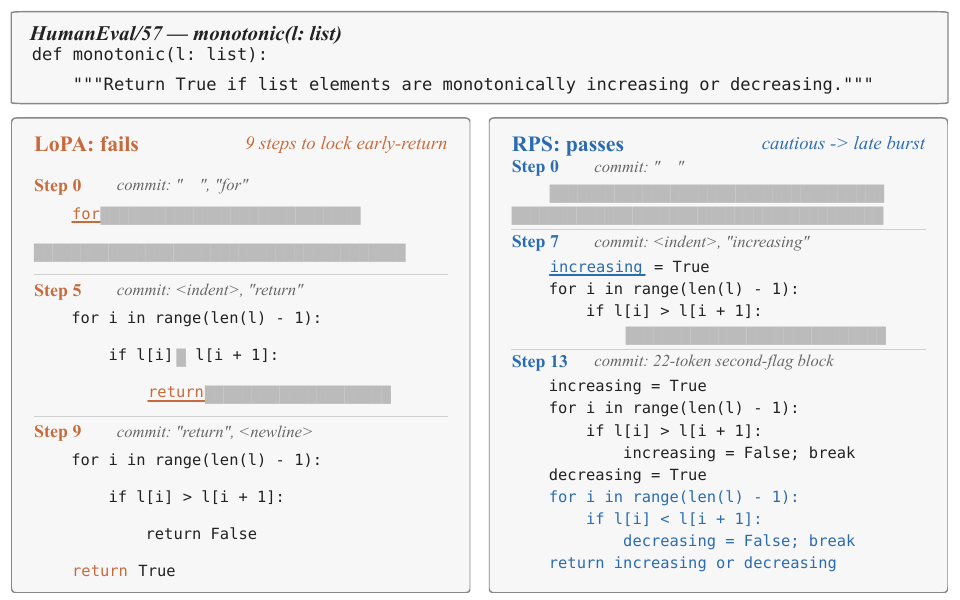}
  \end{minipage}
  \hfill
  % Right panel
  \begin{minipage}[t]{0.53\textwidth}
    \vspace{0pt}
    \centering
    \includegraphics[width=\linewidth]{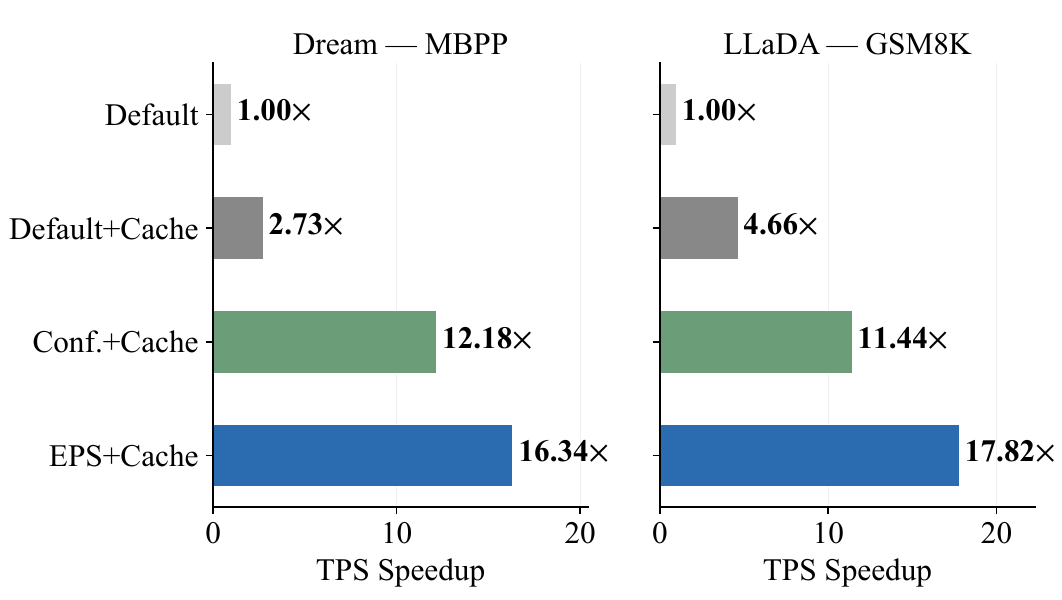}
  \end{minipage}
  \vspace{-10pt}
  \caption{
    \textbf{Left:} Decoding snapshots of LoPA and {\methodshort} on
    HumanEval/57 (\texttt{monotonic}). Coloured tokens denote newly committed
    tokens, underlined tokens denote lookahead commits, and
    ``$\blacksquare$'' denotes still-masked positions.
    \textbf{Right:} TPS speedup relative to Default and Confidence with and without
    Fast-dLLM prefix caching, evaluated on MBPP with Dream
    and GSM8K with LLaDA.
  }
  \label{fig:case-cache-summary}

  \vspace{-10pt}
\end{figure*}

\paragraph{Speedup decomposition.}
\label{sec:exp-decomposition}
\begin{wraptable}{r}{0.48\textwidth}
\centering
\small
\vspace{-12pt}
\caption{Average forward-pass breakdown per sample for {\methodshort} on LLaDA GSM8K.}
\label{tab:decomp}
\vspace{-8pt}
\setlength{\tabcolsep}{3pt}
\begin{tabular}{@{}lccc@{}}
\toprule
\textbf{Forward type} & \textbf{Avg. count} & \textbf{Avg. latency} & \textbf{Time \%} \\
\midrule
Normal & 12.3 & 219\,ms & 31.4\% \\
Lookahead & 23.4 & 252\,ms & 68.6\% \\
\midrule
\textbf{Total} & \textbf{35.7} & \textbf{241\,ms} & 100\% \\
\bottomrule
\end{tabular}
\vspace{-10pt}
\end{wraptable}
As shown in Fig.~\ref{fig:overview}, our packed lookahead evaluates all candidate branches jointly with the normal sequence in a single forward pass. Although this increases per-step computation, Table~\ref{tab:decomp} shows that a lookahead pass is only 15\% slower than a normal pass, while reducing the average number of forward passes from 77.80 under confidence decoding to 35.73 with {\methodshort}. Thus, a modest per-step overhead yields a substantial reduction in decoding iterations.
% As illustrated in Figure~\ref{fig:overview}, our lookahead forward pass concatenates all candidate branches with the normal sequence so that they can be evaluated jointly without introducing a separate forward pass for each candidate. Although this packed design naturally increases the computation within each decoding step, the breakdown in Table~\ref{tab:decomp} shows that a lookahead pass is only 15\% slower than a normal pass. In return, introducing lookahead reduces the average number of forward passes from 77.80 under Confidence decoding to 35.73 under {\methodshort}. These results validate the packed lookahead design: a modest increase in per-step cost enables a substantial reduction in the number of decoding steps.

\paragraph{Compatibility with KV caching.}
\label{sec:exp-cache}
We combine {\methodshort} with Fast-dLLM prefix caching to test whether iteration-level and per-forward optimizations are complementary. As in Fig.~\ref{fig:case-cache-summary} (right), the combination achieves the highest throughput in both settings, with up to 17.82$\times$ TPS speedup and less than 0.5\% accuracy change. This complementarity is natural: {\methodshort} reduces decoding iterations via proactive commitment, while KV caching lowers the attention cost per iteration.
% We combine {\methodshort} with the Fast-dLLM prefix cache to test whether iteration-level and per-forward optimizations compose. As shown in Figure~\ref{fig:case-cache-summary} (right), the combined method achieves the highest throughput in both evaluated settings, reaching up to 17.82$\times$ TPS speedup, while caching changes accuracy by less than 0.5 points. This compatibility follows naturally from their complementary roles: {\methodshort} reduces the number of decoding iterations through proactive commitment, whereas KV caching reduces the attention cost within each iteration.

%% file: sections/conclusion.tex
\section{Conclusion}

We presented Ripple-Pivot Search (RPS), a training-free parallel decoding
method for diffusion language models motivated by the ripple effect.
Proactively committing a pivot position in the mid-entropy regime can
substantially reduce uncertainty across the remaining masked positions,
creating more opportunities for parallel commitment in subsequent decoding
steps. RPS exploits this effect by addressing both \emph{where to commit} and \emph{what to commit}. 
It seeks mid-entropy pivots with high potential for
downstream uncertainty reduction and determines their token assignments
through lookahead evaluation over plausible candidates. Across three dLLMs
and four reasoning and code-generation benchmarks, RPS achieves
4--10$\times$ wall-clock speedup over the standard decoder while largely
preserving generation quality, and improves accuracy over the previous
lookahead baseline by up to 5.49\% while delivering higher
throughput in most settings. When combined with KV caching, RPS further
achieves up to 18$\times$ wall-clock speedup over the standard decoder.
Together, these results demonstrate the effectiveness of jointly searching
over commitment positions and token assignments for efficient dLLM decoding.

%% file: sections/appendix.tex
\section{{\methodshort} Algorithm Pseudocode}
\label{app:algorithm}

\begin{algorithm}[H]
\small
\setlength{\baselineskip}{1.2\baselineskip}
\caption{{\methodshort} decoding}
\label{alg:eps}
\begin{algorithmic}[1]
\setstretch{1.2}
\Require prompt $y$; generation length $L$; block size $B$; hyperparameters $\tau_{\text{pivot}}, k_{\max}, r, \lambda, \tau$
\State $x \gets \texttt{[MASK]}^L$ \Comment{Initialize response with all masks}
\For{$b = 1, \dots, L/B$} \Comment{Semi-autoregressive block loop}
  \State $\{p_i\} \gets$ forward pass over $[y\Vert x]$ \Comment{Block-opening forward}
  \State $\mathcal{M} \gets$ masked positions in block $b$
  \While{$\mathcal{M} \neq \emptyset$}
    \State $\mathcal{S} \gets \{i \in \mathcal{M} : P_i^{\max} \geq \tau\}$; if $\mathcal{S} = \emptyset$, set $\mathcal{S} \gets \{\arg\max_{i \in \mathcal{M}} P_i^{\max}\}$ \Comment{Standard commit}
    \State Commit $x_i \gets \arg\max_v p_i(v)$ for $i \in \mathcal{S}$; update $\mathcal{M} \gets \mathcal{M} \setminus \mathcal{S}$
    \If{$\mathcal{M} = \emptyset$} \textbf{break} \EndIf
    \State $\mathcal{T}_i \gets \mathrm{top}_{k_{\max}}(p_i)$ and $\mu_i \gets \sum_{v \in \mathcal{T}_i} p_i(v)$ for each $i \in \mathcal{M}$
    \State $\mathcal{M}_{\text{feas}} \gets \{i \in \mathcal{M} : \mu_i \geq \tau_{\text{pivot}}\}$ \Comment{Probability-mass constraint}
    \If{$\mathcal{M}_{\text{feas}} = \emptyset$}
      \State $\{p_i\} \gets$ forward pass over $[y\Vert x]$; \textbf{continue}
    \EndIf
    \State $i^\star \gets \arg\max_{i \in \mathcal{M}_{\text{feas}}} \big[-\sum_{v \in \mathcal{T}_i} p_i(v)\log p_i(v)\big]$ \Comment{Truncated-entropy maximization}
    \State $\mathcal{C} \gets \{c : p_{i^\star}(c) / P_{i^\star}^{\max} \geq r\}$, capped at $k_{\max}$ tokens; add $[\texttt{MASK}]$
    \If{$|\mathcal{C} \setminus \{[\texttt{MASK}]\}| \leq 1$}
      \State $\{p_i\} \gets$ forward pass over $[y\Vert x]$; \textbf{continue}
    \EndIf
    \State $x_{\text{ahead}} \gets [x \;\|\; x_{b}^{(c_1)} \|\; \cdots \;\|\; x_{b}^{(c_{|\mathcal{C}|})}]$; \; $\{p_i^{c}\} \leftarrow$ forward pass over $[y\Vert x_{\text{ahead}}]$ \Comment{Lookahead forward}
    \State $c^\star \gets \arg\max_{c \in \mathcal{C}} \mathrm{score}(c)$ via Eq.~(\ref{eq:score}) \Comment{Scoring and commit}
    \If{$c^\star \neq [\texttt{MASK}]$}
      \State Commit $c^\star$ at $i^\star$; $\mathcal{M} \gets \mathcal{M} \setminus \{i^\star\}$; $\{p_i\} \gets \{p_i^{(c^\star)}\}$ \Comment{Reuse winner's logits}
    \Else
      \State $\{p_i\} \gets \{p_i^{([\texttt{MASK}])}\}$ \Comment{Reuse anchor's logits}
    \EndIf
  \EndWhile
\EndFor
\State \Return $x$
\end{algorithmic}
\end{algorithm}

\section{Proofs for the Lookahead Objective}
\label{app:objective-proof}

\begin{proof}[Proof of Proposition~\ref{prop:entropy-parallelism}]
For $i\in\mathcal M\setminus\{i^\star\}$, define
\begin{equation*}
  q_i=\max_v p_i^c(v),
  \qquad
  \mathcal U_\tau(c)=\{i\in\mathcal M\setminus\{i^\star\}:q_i<\tau\},
  \qquad
  U_\tau(c)=|\mathcal U_\tau(c)|.
\end{equation*}
Fix $i\in\mathcal U_\tau(c)$, so $q_i<\tau<1$. If $q_i\geq 1/2$, let $v_i^\star\in\arg\max_v p_i^c(v)$ and define
$\widetilde p_i^c(v)=p_i^c(v)/(1-q_i)$ for $v\neq v_i^\star$.
The entropy decomposition gives
\begin{align*}
  H(p_i^c)
  &=h(q_i)+(1-q_i)H(\widetilde p_i^c) \\
  &\geq h(q_i).
\end{align*}
If $q_i<1/2$, then $p_i^c(v)\leq q_i$ for every $v$, and hence
\begin{align*}
  H(p_i^c)
  &=\sum_v p_i^c(v)\log\frac{1}{p_i^c(v)} \\
  &\geq\sum_v p_i^c(v)\log\frac{1}{q_i}
   =-\log q_i.
\end{align*}
Since $h(x)$ is non-increasing on $[1/2,1]$ and $h(\tau)\leq\log 2$, for every $i\in\mathcal U_\tau(c)$,
\begin{equation*}
  H(p_i^c)
  \geq
  \begin{cases}
    h(q_i)\geq h(\tau), & q_i\in[1/2,\tau),\\
    -\log q_i>\log 2\geq h(\tau), & q_i<1/2.
  \end{cases}
\end{equation*}
Summing this bound over $\mathcal U_\tau(c)$ yields
\begin{align*}
  n\bar H_c
  =\sum_{i\in\mathcal M\setminus\{i^\star\}}H(p_i^c)
  &\geq\sum_{i\in\mathcal U_\tau(c)}H(p_i^c) \\
  &\geq U_\tau(c)h(\tau),
\end{align*}
and therefore
\begin{equation*}
  U_\tau(c)
  \leq
  \min\!\left\{
    n,
    \left\lfloor\frac{n\bar H_c}{h(\tau)}\right\rfloor
  \right\}.
\end{equation*}
Since $N_\tau(c)=n-U_\tau(c)$,
\begin{align*}
  N_\tau(c)
  &=n-U_\tau(c) \\
  &\geq
  \max\!\left\{
    0,
    n-\left\lfloor\frac{n\bar H_c}{h(\tau)}\right\rfloor
  \right\},
\end{align*}
which is Eq.~(\ref{eq:parallelism-bound}).
\end{proof}

\begin{proof}[Proof of Proposition~\ref{prop:selection-margin}]
Let
\begin{equation*}
  S(c)=-\bar H_c+\lambda\log a_c,
\end{equation*}
Then
\begin{align*}
  \arg\max_{c\in\mathcal C}S(c)
  &=\arg\max_{c\in\mathcal C}
    \{-\bar H_c+\lambda\log a_c\} \\
  &=\arg\min_{c\in\mathcal C}
    \{\bar H_c-\lambda\log a_c\},
\end{align*}
which proves Eq.~(\ref{eq:lagrangian-score}). For any $c,d\in\mathcal C$,
\begin{align*}
  S(c)\geq S(d)
  &\iff
  -\bar H_c+\lambda\log a_c
  \geq
  -\bar H_d+\lambda\log a_d \\
  &\iff
  \bar H_d-\bar H_c
  \geq
  \lambda(\log a_d-\log a_c) \\
  &\iff
  \bar H_d-\bar H_c
  \geq
  \lambda\log\frac{a_d}{a_c},
\end{align*}
which proves Eq.~(\ref{eq:selection-margin}) in both directions.
\end{proof}

\section{Implementation Details}
\label{app:implementation}

\paragraph{Baselines.}
Default uses highest-confidence unmasking with one token per step. Confidence follows Fast-dLLM~\citep{DBLP:journals/corr/abs-2505-22618} with threshold $\tau=0.9$. KLASS~\citep{DBLP:journals/corr/abs-2511-05664}, EB-Sampler~\citep{DBLP:journals/corr/abs-2505-24857}, and WINO~\citep{DBLP:journals/corr/abs-2507-18578} follow the settings in their original papers and report the best result from the prescribed sweep ranges: for KLASS, we use confidence threshold $\tau=0.9$ and select the KL threshold $\epsilon_{\mathrm{KL}}$ from $\{0.015, 0.01, 0.005, 0.001\}$; for EB-Sampler, we sweep $\gamma \in \{0.1, 0.01, 0.001\}$ with the confidence-based error proxy; for WINO, we sweep the drafting threshold $\tau_1 \in \{0.5, 0.6, 0.7, 0.8\}$ while fixing the verification threshold $\tau_2=0.9$. For LoPA, we reimplement its core decoding algorithm without the LoPA-Dist distributed inference system or its system-level optimizations, so that all methods run on the same single-device inference stack.

\paragraph{Hardware.}
All experiments are conducted on a node equipped with 4$\times$ NVIDIA A100-SXM4-80GB GPUs. Speed metrics (TPS) are measured on an equivalent single-GPU basis.

\section{LLaDA-1.5 Results}
\label{app:llada15}

\begin{table}[H]
\centering
\caption{\textbf{LLaDA-1.5.} Performance and inference speedup comparison across 4 benchmarks.}
\label{tab:main-llada15}
\vspace{4pt}
\small
\setlength{\tabcolsep}{3.5pt}
\begin{tabular}{@{}ll ccccc@{}}
\toprule
\textbf{Benchmark} & \textbf{Method} & \textbf{Acc (\%)} & \textbf{Avg NFE} & \textbf{NFE Speedup} & \textbf{Avg TPS} & \textbf{TPS Speedup} \\
\midrule
\multirow{5}{*}{\shortstack[l]{GSM8K\\(5-shot)}}
& Default & 80.52 & 256.0 & 1.00$\times$ & 3.80 & 1.00$\times$ \\
& Confidence & 81.05 & 74.62 & 3.43$\times$ & 12.96 & 3.41$\times$ \\
& WINO & 80.29 & 57.48 & 4.46$\times$ & 16.64 & 4.38$\times$ \\
& LoPA & 80.21 & 37.61 & 6.81$\times$ & 23.33 & 6.14$\times$ \\
& {\methodshort} & 80.52 & 33.73 & 7.59$\times$ & 25.99 & 6.84$\times$ \\
\midrule
\multirow{5}{*}{\shortstack[l]{HumanEval\\(0-shot)}}
& Default & 42.07 & 256.0 & 1.00$\times$ & 5.03 & 1.00$\times$ \\
& Confidence & 41.46 & 100.92 & 2.54$\times$ & 12.90 & 2.56$\times$ \\
& WINO & 42.68 & 84.76 & 3.02$\times$ & 14.08 & 2.80$\times$ \\
& LoPA & 40.24 & 47.46 & 5.39$\times$ & 24.30 & 4.83$\times$ \\
& {\methodshort} & 41.46 & 42.31 & 6.05$\times$ & 27.21 & 5.41$\times$ \\
\midrule
\multirow{5}{*}{\shortstack[l]{MATH500\\(4-shot)}}
& Default & 39.80 & 256.0 & 1.00$\times$ & 5.32 & 1.00$\times$ \\
& Confidence & 39.60 & 95.11 & 2.69$\times$ & 14.26 & 2.68$\times$ \\
& WINO & 40.20 & 75.30 & 3.40$\times$ & 17.83 & 3.35$\times$ \\
& LoPA & 38.20 & 57.70 & 4.44$\times$ & 22.80 & 4.29$\times$ \\
& {\methodshort} & 39.60 & 49.37 & 5.19$\times$ & 25.91 & 4.87$\times$ \\
\midrule
\multirow{5}{*}{\shortstack[l]{MBPP\\(3-shot)}}
& Default & 38.40 & 256.0 & 1.00$\times$ & 1.46 & 1.00$\times$ \\
& Confidence & 38.40 & 42.68 & 6.00$\times$ & 8.37 & 5.73$\times$ \\
& WINO & 39.00 & 46.48 & 5.51$\times$ & 7.91 & 5.42$\times$ \\
& LoPA & 40.20 & 24.66 & 10.38$\times$ & 14.38 & 9.85$\times$ \\
& {\methodshort} & 44.20 & 23.50 & 10.89$\times$ & 14.40 & 9.86$\times$ \\
\bottomrule
\end{tabular}
\end{table}

\section{Limitations}
\label{app:limitations}

\paragraph{Limitations.}
While {\methodshort} demonstrates strong acceleration gains while largely preserving generation quality, it still has several limitations. First, although the plausibility weight $\lambda$ is not highly sensitive within a broad validated plateau (\S\ref{sec:exp-aw-sweep}), it still requires per-task selection within that range and therefore does not constitute zero-tuning in the strictest sense. Second, our characterization of the {\effectname} is empirical and qualitative rather than theoretically derived. In particular, the motivating analysis in \S\ref{sec:intro} is conducted on a subset of GSM8K and is intended to illustrate the phenomenon, not to establish that the precise location of the cascade peak quantitatively generalizes across models or tasks. Third, when combined with prefix cache acceleration (\S\ref{sec:exp-cache}), accuracy degradation may arise from the cache approximation itself. This effect is not specific to {\methodshort} and is also observed for other parallel decoding methods, but mitigating it remains outside the scope of this work.